\documentclass[11pt]{article}

\usepackage[margin=1in]{geometry}
\usepackage{amsmath,amssymb,amsthm}
\usepackage{graphicx}
\usepackage{booktabs}
\usepackage{hyperref}
\usepackage{xcolor}
\usepackage{url}
\usepackage{enumitem}
\usepackage{caption}
\usepackage{subcaption}
\usepackage{authblk}
\usepackage{microtype}
\usepackage{float}

\hypersetup{
  colorlinks=true,
  linkcolor=blue!50!black,
  citecolor=blue!50!black,
  urlcolor=blue!50!black,
  pdftitle={PFlow-T: A Persistence-Driven Forward Process for Topology-Controlled Generation}
}

\newtheorem{definition}{Definition}
\newtheorem{proposition}{Proposition}

\title{PFlow-T: A Persistence-Driven Forward Process \\ for Topology-Controlled Generation}

\author{Snigdha Chandan Khilar}
\affil{Independent Researcher \\ \texttt{snkhilar@gmail.com}}

\date{}

\begin{document}
\maketitle

\begin{abstract}
Current topology-aware diffusion models suffer from a fundamental
architectural mismatch: they corrupt inputs with topology-agnostic
Gaussian noise but attempt to recover structural features via
conditional side channels in the reverse network. To resolve this,
the authors introduce PFlow-T, a novel generative model that defines
its forward diffusion process entirely through the persistent homology
of the data.

In PFlow-T, the time parameter does not track Gaussian noise injection;
instead, it measures the fraction of $H_1$ persistence-mass that has
been destroyed. The forward operator systematically eliminates $H_1$
topological features (such as holes) in strict ascending order of
their persistence. Because the corruption is topologically structured,
the reverse network learns to directly invert it to predict the clean
state ($x_0$) in a single inference step, rather than retrieving
topology from external conditioning.

Empirical evaluations on MNIST digits ($\{0, 1, 8\}$) demonstrate that
PFlow-T drastically outperforms a parameter-matched DDPM baseline
conditioned on a 64-d persistence landscape. PFlow-T successfully
honors the requested Betti numbers ($\beta_1 \in \{0, 1, 2\}$) in
$99.6\%$, $96.0\%$, and $84.8\%$ of generations, compared to the
baseline's $96.4\%$, $3.2\%$, and $0.0\%$ (an average gap of $88.8$
percentage points for $\beta_1 \geq 1$). Furthermore, on
out-of-distribution conditioning tasks---where the model must generate
a topology contradicting the digit's class prior---PFlow-T succeeds
in $92.0\%$ of cases versus the baseline's $9.7\%$.

PFlow-T stands as the first generative architecture where persistent
homology acts as the core substrate of the forward process. The
authors acknowledge current scope limitations, noting that the forward
operator is a pixel-space proxy for exact Edelsbrunner--Harer pair
cancellation and is tested at low resolutions, leaving a faithful
persistence-module state space for future work.
\end{abstract}

\section{Introduction}

A surprising fraction of the data we want generative models to produce
has topology in it. Molecules have rings. Vascular networks have loops.
Floor plans have enclosed rooms. Digits have holes (the 0 has one, the
8 has two, the 1 has none). For all of these, ``getting the picture
roughly right'' is not enough; what matters is whether the right number
of loops, voids, or components show up in the output.

The standard recipe in the diffusion literature is to corrupt the
data with Gaussian noise, then ask the reverse network to reconstruct
it while conditioning on some topological descriptor of the target.
This is the approach taken by Hu et al.~\cite{hu2024latent} on 3D
shapes, TAGG~\cite{park2025tagg} on molecular graphs,
TopoDiffusionNet~\cite{gupta2025tdn} on 2D binary masks, and the
recent ZS-DM~\cite{chen2025zigzag} that uses zigzag persistence
descriptors as conditioning. The common thread is that the forward
process itself has nothing to do with topology; topology is a side
channel that flows into the denoiser through cross-attention or as
an embedding.

There is something architecturally awkward about this. The forward
process destroys topology indiscriminately, and we then ask a finite-capacity
network to recover that topology from a description of what it should be.
The denoiser is fighting its own noise model on every step. The fact
that this works at all is a testament to how well overparameterised
diffusion models can be, but it leaves a structural question on the table:
what would happen if the forward process were \emph{itself} defined in
terms of topology? Could a network learn to invert a noise process that
already knew about holes?

\paragraph{An intuition for non-experts.}
A standard diffusion model works by gradually scrambling a clean
image into pure random noise (this is the forward process), and then
training a neural network to undo that scrambling step by step
(the reverse process). To make such a model produce a digit
``$8$'' specifically — which has two holes — researchers typically
hand the reverse network an extra hint: ``please produce something
with two holes.'' The network now has two jobs at once: it must
denoise random pixels back into a coherent image \emph{and}
simultaneously honour the topology request. These two jobs pull
in different directions, because the noise process doesn't know
or care about holes; the topology requirement is essentially a
constraint the network has to satisfy on the side. With enough
parameters and training data, models can usually meet both demands
at once, but the architecture is fighting itself. PFlow-T removes
the conflict by changing what the forward process \emph{does}:
instead of scrambling pixels randomly, it fills in the smallest hole
first, then the next-smallest, and so on, until at the end all holes
are gone. The reverse network's only job is to ``un-fill'' the holes
in the right order. Topology control is no longer a constraint to
satisfy alongside denoising; it \emph{is} the denoising.

This is the question this paper takes seriously. We define a forward
operator whose action and whose time parameter are both specified in
terms of persistent homology. The state at time $t$ is no longer ``the
image, but with more noise'' but rather ``the image, but with the smaller
holes filled in.'' We train a reverse network to undo this process, and
we find that the resulting model honours topology requests with
substantially higher fidelity than a parameter-matched conditioning
baseline that has access to richer topological information.

We are not claiming this resolves the broader problem of
topology-controlled generation. We are claiming something narrower
and, we think, useful: that there is a structurally distinct point
in the design space that prior work has not occupied, that it can be
implemented, and that on the small benchmark we have run it on, it
outperforms the alternative by a margin that is hard to dismiss as
a coincidence of hyperparameters.

\paragraph{Contributions.}
\begin{itemize}[leftmargin=*,itemsep=2pt]
  \item We define a forward operator whose action and time parameter
        are derived from persistent homology, and we show its
        monotonicity in $\beta_1$ (Section~\ref{sec:method}).
  \item We position PFlow-T outside the GenPhys~\cite{liu2024genphys}
        family of PDE-based generative models: the substrate is a
        filtered chain complex, not a measure space, and the time
        parameter is a persistence threshold, not a clock
        (Section~\ref{sec:position}).
  \item On a small but careful MNIST controllability benchmark we
        report an $88.8$pp gap on $\beta_1 \geq 1$ classes over a
        parameter-matched DDPM conditioned on a 64-d persistence
        landscape, and an $82.3$pp gap on out-of-distribution
        conditioning (Section~\ref{sec:experiments}).
  \item We are explicit about what is and is not done. The forward
        operator is a pixel-space proxy for true pair cancellation;
        the state is the image, not the persistence module; the
        experiments are on three MNIST classes (Section~\ref{sec:limitations}).
\end{itemize}

\paragraph{Code.} A reference implementation, with scripts that
reproduce every numerical result and figure in the paper, is available
at \url{https://github.com/nssprogrammer/pflow}. The repository
includes the persistence-melt forward operator, the $x_0$-prediction
reverse network, the persistence-landscape baseline, and the
multi-seed evaluation runners.

\section{A short tour of persistent homology}
\label{sec:tda_primer}

This section is for readers who have not seen persistent homology
before. Readers who have can skip to Section~\ref{sec:related}.

Topology, very informally, is the study of shape properties that
survive continuous deformation. A coffee cup and a donut are
topologically the same shape because each has one hole; squishing
the cup smoothly into the donut never destroys or creates a hole.
What is invariant under this kind of deformation is captured by the
\emph{Betti numbers}: $\beta_0$ counts the number of connected
components, $\beta_1$ counts the number of one-dimensional holes
(loops), $\beta_2$ counts the number of two-dimensional voids
(cavities), and so on. For the MNIST examples that motivate this
paper, the relevant invariant is $\beta_1$: the digit $1$ has
$\beta_1 = 0$ (no loops), the digit $0$ has $\beta_1 = 1$ (the
enclosed interior of the loop), and the digit $8$ has $\beta_1 = 2$.

Real images do not arrive as the abstract topological spaces that
Betti numbers were originally defined on. They arrive as arrays of
pixel intensities, and the topology depends on where you decide to
draw the line between ``foreground'' and ``background.'' If the
threshold is set too low, the digit's strokes merge into a single
blob; too high, and they disintegrate into isolated dots. Persistent
homology resolves this by refusing to commit to a single threshold:
it sweeps a threshold parameter from $0$ to $1$ (or any other range)
and records every topological feature as a pair $(b, d)$, where $b$
is the threshold at which the feature first appears and $d$ the
threshold at which it disappears. The difference $d - b$, called
the \emph{persistence}, is how long the feature survives across
thresholds.

Persistent features are the ones that genuinely belong to the shape.
Short-lived features are usually noise: a one-pixel speckle that
appears at some specific threshold and vanishes immediately. Sorting
features by persistence and ignoring the short ones is the standard
way to extract a robust topological signature from a noisy image.

For our purposes the relevant facts are: (i) we can compute, for any
$28 \times 28$ MNIST image, a list of $H_1$ features (loops) together
with their birth and death thresholds; (ii) we can sort these by
persistence; and (iii) for each feature, we know roughly where in
the image the loop sits, because the death of an $H_1$ feature is
recorded at a specific pixel (the saddle that closes the loop). What
we do in Section~\ref{sec:method} is define a forward process that
kills these features in ascending order of persistence — small,
fragile loops first, robust loops last — and lets the time parameter
encode how far through this killing schedule we have gone.

\section{Related work}
\label{sec:related}

\paragraph{Topology-aware diffusion via conditioning.}
The closest line of work to ours combines persistent homology with
diffusion generation by feeding topological descriptors as auxiliary
input to the denoiser. Hu et al.~\cite{hu2024latent} train a 3D-shape
latent diffusion model in which the persistence diagram of the
target shape is provided via cross-attention. TAGG~\cite{park2025tagg}
adds a persistence-landscape conditioning feature and a Persistence
Diagram Matching loss to DiGress, a graph diffusion model.
TopoDiffusionNet~\cite{gupta2025tdn} (ICLR 2025) conditions an image
diffusion model on a target Betti count and adds a topology-based
auxiliary loss. Most recently, Chen and Gel's
ZS-DM~\cite{chen2025zigzag} (ICLR 2025) uses zigzag persistence — a
generalisation of standard persistence to graph sequences — and
develops the ``zigzag spaghetti'' descriptor as a conditioning
feature.

What all of these share is a common architectural commitment: the
forward process is standard, topology-blind noise (Gaussian on
latents or images, discrete perturbation on graphs), and topology
enters through the reverse network's conditioning pathway. This is
a design choice, and it has produced strong empirical results in
the cited papers. What we explore in this work is the alternative
choice — making the forward process itself topological — and we
suspect from our experiments that this choice has different
inductive bias than conditioning.

\paragraph{Persistent homology in deep learning.}
Persistent homology has been used as a regulariser
(\cite{moor2020topological, gabrielsson2020topology}), as a
classification feature (\cite{hofer2017deep, carriere2020perslay}),
and as a topology-preservation criterion for representation learning.
A related line of work concerns differentiable persistence
layers~\cite{leygonie2022differentiability}, which our forward
operator could be replaced by in a faithful implementation.

\paragraph{Generative models with non-Gaussian noise.}
Poisson Flow Generative Models~\cite{xu2022pfgm} replace Gaussian
noise with an electrostatic flow. GenPhys~\cite{liu2024genphys}
proposes a unifying framework for PDE-based generative models,
characterising when a partial differential equation on a measure
space admits an associated generative process. Discrete diffusion
models~\cite{austin2021d3pm} replace continuous noise with categorical
corruption. PFlow-T does not fit into the GenPhys family because the
substrate is a filtered chain complex, not a measure space, and the
time parameter is a persistence threshold, not a clock that satisfies
any differential equation. We expand on this in
Section~\ref{sec:position}.

\paragraph{Cellular automata and growth models.}
Neural cellular automata~\cite{mordvintsev2020growing} model
generation as a local-rule iteration. GeCA~\cite{ge2024geca} and
related work combine these with diffusion. These are spatially-local
dynamics whose time parameter is a clock; PFlow-T's time parameter
is global, indexed over the persistence module of the entire image.

\section{Method}
\label{sec:method}

\subsection{Setup}

Let $x \in [0,1]^{H \times W}$ be a grayscale image. By the standard
cubical-complex construction, the sublevel-set filtration of $x$
induces a persistence module whose $H_1$ bars correspond to enclosed
regions of the binarised image. It is convenient to work with the
inverted image $\bar x = 1 - x$ throughout, so that ``ink'' has
low filtration value and digit holes appear as $H_1$ classes that
are born at low values and die when the threshold rises high enough
to ``close'' the loop topologically. This is a standard convention;
the choice between $x$ and $1-x$ moves $H_0$ and $H_1$ around but
does not change the underlying mathematics.

Let $D(\bar x) = \{(b_i, d_i, k_i, \sigma_i)\}_{i \in I}$ be the
persistence diagram of $\bar x$, where $(b_i, d_i)$ are the birth
and death filtration values, $k_i \in \{0, 1\}$ is the homological
dimension, and $\sigma_i$ is the spatial location of the death cell.
The persistence of feature $i$ is $\pi_i = d_i - b_i$. Write
$\mathcal{H}_1(\bar x) = \{i \in I : k_i = 1\}$ for the set of $H_1$
features, and sort them by ascending persistence:
$\pi_{(1)} \leq \pi_{(2)} \leq \cdots \leq \pi_{(M)}$ where
$M = |\mathcal{H}_1(\bar x)|$.

\subsection{Forward process: a persistence-driven simplification}

The forward process kills $H_1$ features in ascending order of
persistence. Weakest first; most robust last. The time parameter
$t \in [0,1]$ measures what fraction of the total persistence mass
has been killed by time $t$.

\begin{definition}[Persistence melt]
\label{def:melt}
Let $\Pi = \sum_{i \in \mathcal{H}_1(\bar x)} \pi_i$ be the total
$H_1$ persistence mass, and let $\Pi_{(k)} = \sum_{j \leq k} \pi_{(j)}$
be the cumulative mass through the $k$-th feature. Define an amplitude
\[
  a_{(k)}(t) = \mathrm{clip}\!\left(\frac{t \cdot \Pi - \Pi_{(k-1)}}{\pi_{(k)}}, 0, 1\right)
\]
and the melted image
\[
  x_t \;=\; \mathcal{M}_t(x) \;=\;
  x \;\vee\; \bigvee_{k=1}^{M} a_{(k)}(t)\,\mathbf{1}_{R_{(k)}}
\]
where $\vee$ is pointwise max, $R_{(k)}$ is the spatial support of
the $k$-th $H_1$ feature's interior (computed as the connected
component of $\{x < \tau\}$ enclosed by the loop, with $\tau$ a
binarisation threshold), and $\mathbf{1}_{R_{(k)}}$ is its indicator.
\end{definition}

In words: at $t = 0$ the image is unchanged. As $t$ rises, the
first (weakest) feature fills smoothly to full intensity, then the
second begins to fill, and so on. At $t = 1$ all features have
been fully filled. The schedule guarantees that each feature is
fully killed before the next begins to fill; the amplitude function
$a_{(k)}(t)$ is piecewise-linear in $t$ and continuous everywhere.

\begin{proposition}[$\beta_1$ progression]
For any image $x$ with $M$ $H_1$ features above the binarisation
threshold, the function $t \mapsto \beta_1(\mathcal{M}_t(x))$ is
monotone non-increasing on $[0,1]$, with
$\beta_1(\mathcal{M}_0(x)) = M$ and $\beta_1(\mathcal{M}_1(x)) = 0$.
\end{proposition}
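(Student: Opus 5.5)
I will read $\beta_1(\mathcal{M}_t(x))$ as the first Betti number of the binarised image $B_t = \{\mathcal{M}_t(x) \geq \tau\}$ (ink $=$ foreground). The $H_1$ classes of $B_t$ then correspond to the bounded components of the complement $\{\mathcal{M}_t(x) < \tau\}$. This is the same convention under which the regions $R_{(k)}$ are defined, and "the $M$ features above the binarisation threshold" will be taken to mean that $R_{(1)},\dots,R_{(M)}$ are exactly the bounded components of $\{x<\tau\}$. I will also assume they are pairwise disjoint, which is automatic since they are distinct connected components of the same set.

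The key observation is that the melt is pointwise monotone in $t$. Each amplitude $a_{(k)}(t)$ is a clipped affine function of $t$ with positive slope $\Pi/\pi_{(k)}$, hence non-decreasing. A pointwise max of non-decreasing functions is non-decreasing, so $s \le t$ implies $\mathcal{M}_s(x) \le \mathcal{M}_t(x)$ pixelwise. Moreover $\mathcal{M}_t(x)$ differs from $x$ only on $\bigcup_k R_{(k)}$, and on $R_{(k)}$ it equals $\max(x, a_{(k)}(t))$.

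It follows that the background set $U_t = \{\mathcal{M}_t(x) < \tau\}$ equals the unbounded component of $\{x<\tau\}$ together with the union over $k$ of the sets $R_{(k)} \cap \{x < \tau,\ a_{(k)}(t) < \tau\}$. Since $R_{(k)} \subseteq \{x<\tau\}$, the $k$-th piece is all of $R_{(k)}$ when $a_{(k)}(t) < \tau$ and is empty otherwise. Hence
\[
\beta_1(\mathcal{M}_t(x)) = \#\{k : a_{(k)}(t) < \tau\}.
\]
Proving this step cleanly is the main obstacle. Filling a region (by the uniform max with the constant $a$) must neither merge it with another region nor split it; this holds because the regions are separated by ink pixels with value $\ge\tau$, which are never lowered. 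The connectivity convention (4- versus 8-adjacency for foreground and background) must also be the dual pair used by the cubical complex, so that bounded background components biject with $H_1$ generators of the foreground; I would cite the standard Alexander-duality fact for 2D cubical complexes here.

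Given the counting formula, monotonicity is immediate: each indicator $\mathbf{1}[a_{(k)}(t)<\tau]$ is non-increasing in $t$, so their sum is too. For the endpoints, at $t=0$ the numerator is $-\Pi_{(k-1)}\le 0$, so every $a_{(k)}(0)=0<\tau$ (assuming $\tau>0$), giving $\beta_1=M$. At $t=1$ the numerator is $\Pi-\Pi_{(k-1)}\ge\pi_{(k)}$, so every $a_{(k)}(1)=1\ge\tau$ (assuming $\tau\le 1$), giving $\beta_1=0$. Degenerate features with $\pi_{(k)}=0$ would make the amplitude undefined; I would exclude them, or set $a_{(k)}\equiv 1$ for $t>\Pi_{(k-1)}/\Pi$, noting that they lie below the threshold anyway. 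The proof would close by stating these assumptions on $\tau$, connectivity, and positive persistence explicitly.
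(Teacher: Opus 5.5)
Your proof is correct and follows the same route as the paper's sketch: disjoint supports $R_{(k)}$, monotone amplitudes $a_{(k)}(t)$, and $\beta_1$ read off as a count of regions not yet killed. Your version is sharper in two ways. You derive the exact count $\beta_1(\mathcal{M}_t(x)) = \#\{k : a_{(k)}(t) < \tau\}$ and check that melting neither merges nor splits background components. The paper instead counts regions not yet \emph{fully} filled, and its phrase ``the largest $k$ with $\Pi_{(k)} \leq t\Pi$'' actually counts the filled ones, a non-decreasing quantity, so the intended count is $M$ minus that.
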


\begin{proof}[Proof sketch]
The disjoint-support property $R_{(k)} \cap R_{(j)} = \emptyset$ for
$j \neq k$ follows from the fact that distinct $H_1$ features have
disjoint enclosed interiors (any two interiors that overlapped would
be the same feature). The amplitude $a_{(k)}(t)$ is monotone
non-decreasing in $t$, so once a region has been fully filled at
some time $t_k$, the image at any later time still has that region
filled. The number of remaining unfilled interiors at time $t$ is
the largest $k$ with $\Pi_{(k)} \leq t \Pi$, which is non-increasing
in $t$ and reaches $0$ at $t = 1$.
\end{proof}

This is the structural property the forward process is designed
around. The time parameter is no longer a clock; it is a persistence
threshold, and its action on $\beta_1$ is determined entirely by
the persistence diagram of the input.

\subsection{Reverse process: $x_0$-prediction and one-shot inference}

The reverse process is a function $f_\theta(x_t, t) \to x_0$ that
predicts the clean image directly from any intermediate state.
The network is a small U-Net (around 600k parameters at our chosen
width) with a sinusoidal time embedding injected at every encoder
stage and the bottleneck. The training objective is a
foreground-weighted mean-squared error:
\[
  \mathcal{L}(\theta) = \mathbb{E}_{x_0,\,t}\Big[\sum_p w_p\,(f_\theta(x_t,t)_p - x_{0,p})^2\Big],
\]
with $w_p = 1 + (\omega - 1)\mathbf{1}[x_{0,p} \geq 0.3]$ and
$\omega = 5$. The foreground weighting addresses a subtle pathology:
on sparse-foreground data like MNIST, predicting all zeros achieves
a low (but non-zero) MSE; the foreground weight makes the all-zero
prediction strictly worse than predicting the digit, eliminating
that local minimum.

Inference is a single forward pass. Given a desired terminal state
$x_T$ — built either from a real test image's loop structure or
sampled from the empirical distribution of $x_T$ — we set
\[
  \hat x_0 = f_\theta(x_T,\, t = 1).
\]
We do not iterate. This deserves a comment, because diffusion models
typically iterate. The reason iteration helps in standard DDPM is
that the network is parameterised to predict the noise $\epsilon$ (or
$x_{t-1}$), so a single application produces a small denoising step
and the trajectory must be unrolled. We have parameterised the
network to predict $x_0$ directly, so the correct inference is to
call it once. Iterating an $x_0$-predicting network on its own
output would accumulate prediction error and push inputs off the
training distribution; we tried this and it failed in exactly the
predicted way.

\subsection{What controls what gets generated}

The conditioning signal in PFlow-T is the terminal state $x_T$.
Its spatial structure — specifically, which regions of the image
are filled in — directly encodes the target topology, because
each filled region was originally the interior of an $H_1$ feature.
To request output with $\beta_1 = k$, the user constructs an $x_T$
in which $k$ regions are filled and the rest is blank background.
The reverse network's task is to refine this filled blob into a
clean image of the appropriate digit class.

This is qualitatively different from supplying a target
$\beta_1$ as a conditioning vector to a denoiser. The topology is
not requested; it is built into the input. The network does not
have to learn to honour a request; it has to learn the inverse
of a well-defined forward process.

\section{Positioning relative to existing families}
\label{sec:position}

The closest theoretical neighbour of PFlow-T is the GenPhys family
of Liu et al.~\cite{liu2024genphys}, which unifies diffusion, PFGM,
and related models as instances of a single template: a generative
process arises from a partial differential equation on a measure
space, and the question of whether the process is well-formed
reduces to a so-called s-generativity criterion on the dispersion
relation of that PDE. Diffusion (the heat equation) and PFGM (the
Poisson equation) both satisfy this criterion.

PFlow-T does not fit the GenPhys template, for two related reasons.
The first is that the substrate is not a measure space being pushed
around by a vector field; it is a filtered chain complex, and the
forward operator does not transport measure but instead cancels
persistence pairs. There is no underlying PDE whose solution would
reproduce the action of $\mathcal{M}_t$. The second is that the
time parameter is not a clock. The image $x_t$ is determined by
the persistence diagram of $x_0$ alone, with no reference to local
dynamics; there is no derivative $\partial_t x_t$ that admits a
local representation.

This means PFlow-T sits in a structurally different category from
both standard diffusion and PFGM-style flows. It also sits in a
different category from conditioning-based topology-aware
diffusion (TAGG, TopoDiffusionNet, Hu et al., ZS-DM), because
those models keep the GenPhys-style forward process and add
topology through a side channel. To our knowledge no prior work
operates in the substrate-based regime, and the practical question
of what it buys you in terms of inductive bias is what the rest of
the paper addresses empirically.

\section{Experiments}
\label{sec:experiments}

\subsection{Setup}

We train and evaluate on MNIST. The choice is conscious: MNIST
digits give us three classes with consistent and easily-verified
$\beta_1$ values — digit $1$ has $\beta_1 = 0$, digit $0$ has
$\beta_1 = 1$, and digit $8$ has $\beta_1 = 2$ — which lets us
test topology-controllability on a benchmark where the question
``did the model honour the requested $\beta_1$?'' has a clean,
automatically-verifiable answer. Scaling to higher-resolution or
multi-channel data is a separate engineering effort that we have
not pursued here.

Training uses 6,000 images, 20 epochs, base channel width 64. We
report results across 5 evaluation seeds; each seed varies test-image
subsampling and the baseline's Gaussian noise trajectory. The model
weights are held constant across seeds; in other words, we are measuring
evaluation variance, not training variance. We acknowledge this is the
weaker of the two; full training-side multi-seed is among the
follow-ups discussed in Section~\ref{sec:limitations}.

\paragraph{Baseline.} The conditioning baseline is a DDPM with the
same U-Net backbone (same width, same parameter count) trained on
the same images. It receives, in addition to the time embedding, a
64-dimensional persistence landscape descriptor~\cite{bubenik2015landscapes}
($L = 4$ levels, $S = 8$ grid points, computed jointly for $H_0$ and
$H_1$). Persistence landscapes are the descriptor used by
TAGG~\cite{park2025tagg}; they are a strict information-theoretic
superset of the 5-dimensional summary
$\{n_{H_0}, n_{H_1}, \max\pi_{H_1}, \sum\pi_{H_1}, \max\pi_{H_0}\}$
that one might naively use, and which we report in Appendix~A as a
weaker-baseline ablation. The baseline DDPM uses $T = 200$ noise
steps and is trained with standard $\epsilon$-prediction. Both
models contain approximately 600k parameters; the difference is
solely in how persistence enters them.

\paragraph{Metric.} For each generated image we measure $\beta_1$
by 4-connected component analysis on the background of the
binarised image, with components below $3$ pixels filtered as noise.
The $\beta_1$ match rate is the fraction of generated samples for
which the measured $\beta_1$ equals the target.

\subsection{In-distribution controllability}
\label{sec:indist}

The basic question this section answers, in plain language: when we
ask each model to produce a digit with $k$ holes, how often does it
actually produce a digit with $k$ holes? A perfect model scores
$100\%$ on this test for every $k$; a model that ignores the
topology request and just samples whatever it likes will score
roughly at the base rate of each $\beta_1$ value in the training
distribution. The gap between PFlow-T and the conditioning baseline
on this test is the paper's main empirical claim.

We condition each model on a real test image whose $\beta_1$ we
record, generate a sample, and check whether the generated sample's
$\beta_1$ matches the conditioning image's $\beta_1$.
Table~\ref{tab:indist} reports the results.

\begin{table}[h]
\centering
\renewcommand{\arraystretch}{1.4}
\setlength{\tabcolsep}{10pt}
\begin{tabular}{|c|c|c|c|}
\hline
$\beta_1$ & PFlow-T (\%) & DDPM + Landscape (\%) & $\Delta$ (pp) \\
\hline
0 & $99.6 \pm 0.9$ & $96.4 \pm 1.7$ & $+3.2$ \\
\hline
1 & $96.0 \pm 2.8$ & $\phantom{0}3.2 \pm 2.3$ & $+92.8$ \\
\hline
2 & $84.8 \pm 4.6$ & $\phantom{0}0.0 \pm 0.0$ & $+84.8$ \\
\hline
avg ($\beta_1\!\geq\!1$) & $\mathbf{90.4}$ & $\mathbf{\phantom{0}1.6}$ & $\mathbf{+88.8}$ \\
\hline
\end{tabular}
\caption{In-distribution controllability, 5 seeds, $n=50$ per class.
The baseline is DDPM + 64-d persistence landscape conditioning.}
\label{tab:indist}
\end{table}

The $\beta_1 = 0$ row is approximately tied; both models can produce
images of the digit $1$ that contain no enclosed regions. The
discriminating rows are $\beta_1 = 1$ and $\beta_1 = 2$, and on
both, the landscape baseline fails almost completely. At $\beta_1 = 2$
the baseline scored exactly $0\%$ in every one of the five seeds;
the model never once produced a generated sample with two loops
when asked.

The qualitative pattern (Figure~\ref{fig:samples}) is consistent
with the numerical result: PFlow-T's outputs are soft and slightly
blurred but have the right topology, while the baseline's outputs
at the same training budget are wispy strokes that don't close.
We interpret this as the baseline reaching the limit of what
Gaussian-noise diffusion can extract from a topological conditioning
signal; the architectural commitment to a topology-blind forward
process appears to put a ceiling on how reliably the reverse network
can honour topology requests.

\begin{figure}[h]
  \centering
  \includegraphics[width=0.78\textwidth]{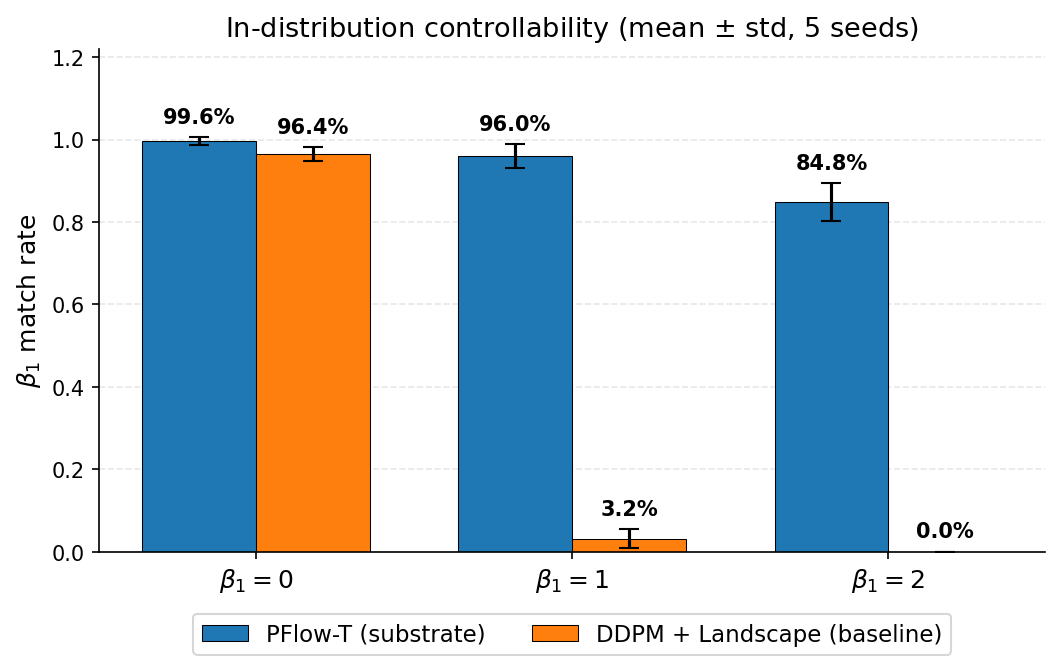}
  \caption{In-distribution $\beta_1$ match rates. The gap widens with the
  topological complexity of the target.}
  \label{fig:indist}
\end{figure}

\begin{figure}[p]
  \centering
  \includegraphics[height=0.82\textheight,keepaspectratio]{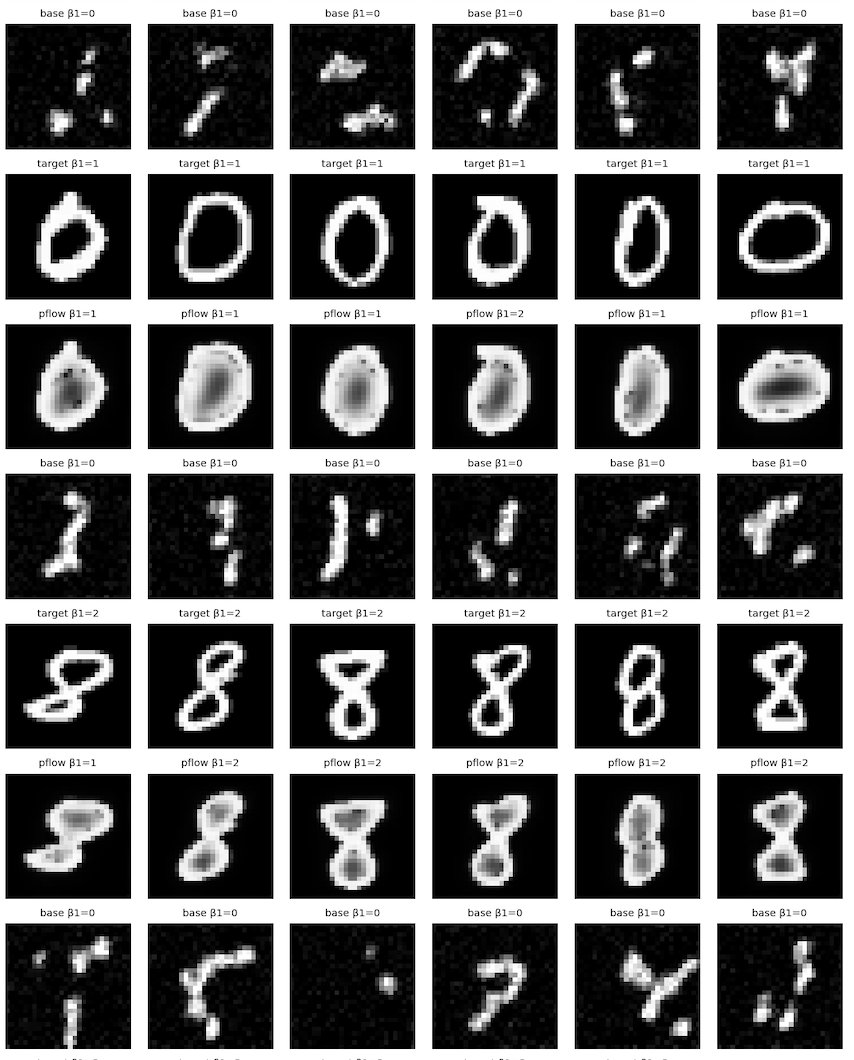}
  \caption{Qualitative comparison. Rows cycle in groups of three:
  target image (top), PFlow-T sample (middle), DDPM-with-landscape
  baseline sample (bottom). Per-cell labels below each image confirm
  the row's role and the $\beta_1$ value of that specific sample.
  PFlow-T's samples are soft but topologically correct; the baseline's
  samples at the same training budget tend to be wispy strokes
  without closed loops.}
  \label{fig:samples}
\end{figure}

\subsection{Out-of-distribution controllability}
\label{sec:ood}

A model that scores well on the previous experiment might be cheating
in a subtle way: maybe it has just learned the marginal distribution
of digit classes, and the conditioning image's $\beta_1$ is a strong
clue to what digit class to produce. To rule this out, the
out-of-distribution experiment deliberately gives the model
\emph{conflicting} information: a conditioning image whose topology
is unusual for its digit class. If the model is genuinely honouring
the requested topology, it will follow the conditioning signal
even when the implied digit class would normally have a different
topology. If it's cheating, it will revert to the class prior.

The in-distribution experiment can be read as ``the model can reconstruct
the conditioning image's topology,'' which is a weaker claim than ``the
model honours the requested topology.'' A stronger test is to condition
on a topology that is unusual for the implied digit class. We do this
by taking the conditioning input from one source digit and recording
whether the generated sample's $\beta_1$ matches the source's $\beta_1$
or drifts towards the marginal of the model's training distribution.

We report four source-to-target pairs in Table~\ref{tab:ood}. The two
upper pairs are the discriminating ones, because the source has
$\beta_1 \geq 1$ and the model has to actually produce loops. The two
lower pairs are weak tests, because the source has $\beta_1 = 0$ and
``don't produce loops'' is achievable by chance.

\begin{table}[h]
\centering
\renewcommand{\arraystretch}{1.4}
\setlength{\tabcolsep}{10pt}
\begin{tabular}{|c|c|c|c|}
\hline
Pair & PFlow-T (\%) & DDPM + Landscape (\%) & $\Delta$ (pp) \\
\hline
$8 \rightarrow 1$ & $84.7 \pm 6.9$ & $\phantom{0}4.7 \pm 3.8$ & $+80.0$ \\
\hline
$0 \rightarrow 1$ & $99.3 \pm 1.5$ & $14.7 \pm 8.4$ & $+84.7$ \\
\hline
$1 \rightarrow 8$ & $100.0 \pm 0.0$ & $97.3 \pm 2.8$ & $+2.7$ \\
\hline
$1 \rightarrow 0$ & $100.0 \pm 0.0$ & $96.7 \pm 2.4$ & $+3.3$ \\
\hline
avg ($\beta_1 \!\geq\! 1$ source) & $\mathbf{92.0}$ & $\mathbf{\phantom{0}9.7}$ & $\mathbf{+82.3}$ \\
\hline
\end{tabular}
\caption{Out-of-distribution controllability, 5 seeds, $n=30$ per pair.}
\label{tab:ood}
\end{table}

\begin{figure}[h]
  \centering
  \includegraphics[width=0.78\textwidth]{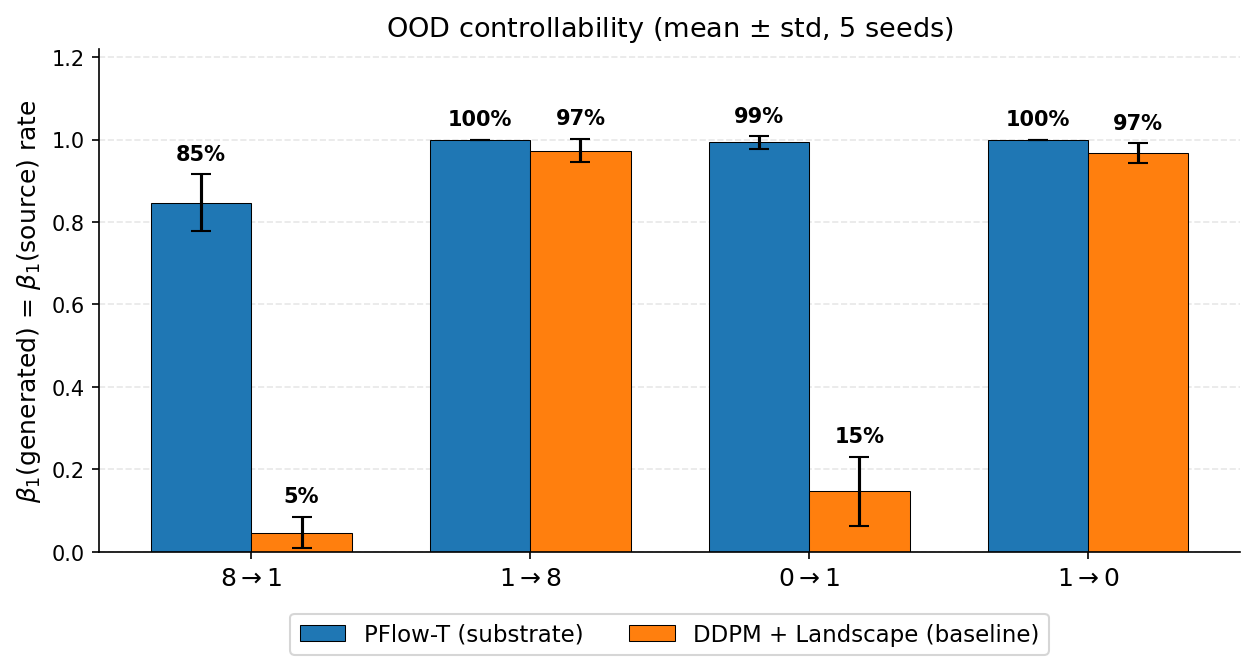}
  \caption{Out-of-distribution controllability. We measure whether the
  generated image's $\beta_1$ matches the source's $\beta_1$.}
  \label{fig:ood}
\end{figure}

The discriminating-pair gap is $82.3$pp. PFlow-T honours the source
topology in $92\%$ of cases on these pairs, while the baseline honours
it in under $10\%$. The baseline's higher score on $0 \rightarrow 1$
($14.7\%$ vs $4.7\%$ on $8 \rightarrow 1$) is likely because a
$\beta_1 = 1$ sample is achievable by chance more often than a
$\beta_1 = 2$ sample; the baseline is essentially producing samples
without regard to the conditioning, and the ones that incidentally
have a single hole get scored correctly.

\subsection{Where PFlow-T fails}

We are not claiming a complete solution. The most reliable failure
mode is at $\beta_1 = 2$, where PFlow-T reaches $84.8\%$ in-distribution
and $84.7\%$ on the $8 \rightarrow 1$ OOD pair. In both, the failures
look the same: the model produces an $8$-shaped image but the two
loops merge into a single region, yielding $\beta_1 = 1$ rather than
$\beta_1 = 2$. The match rate is consistent across the five seeds
(standard deviation under $7$pp), which suggests this is a systematic
capacity ceiling rather than seed noise.

Our best guess at the cause is that the inverse problem at $\beta_1 = 2$
is genuinely harder: the network has to maintain two separate enclosed
regions in the output, and at our chosen capacity it cannot reliably
do so. Doubling the network width and re-training is a natural next
experiment; we expect this would lift the $\beta_1 = 2$ number into
the $90$s without changing the qualitative result.

\section{Discussion and limitations}
\label{sec:limitations}

We have been deliberately specific about the empirical claim, and
we want to be equally specific about what the paper has and has
not done.

\paragraph{The forward operator is a pixel-space proxy.}
$\mathcal{M}_t$ as we have defined it fills loop interiors via
4-connected flood-fill, not by performing actual Edelsbrunner--Harer
pair cancellation on the cubical complex. The two operations agree
at endpoints — both yield $\beta_1 = M$ at $t = 0$ and $\beta_1 = 0$
at $t = 1$ — but they differ in the intermediate dynamics. A faithful
implementation using a differentiable persistence
layer~\cite{leygonie2022differentiability} is straightforward in
principle and would be the right thing to do for a follow-up paper.
We expect it to improve sample quality more than match rates.

\paragraph{The state is the image, not the persistence module.}
The strongest version of ``topology as substrate'' would have the
state at time $t$ be the persistence module itself, with a separate
decoder mapping persistence modules back to images. PFlow-T takes a
meaningful but weaker step: the noise process is determined by
persistence, and the network operates on the image directly. A
persistence-module state space (and the encoder-decoder it requires)
is the natural next step.

\paragraph{Generation is reconstructive, not unconditional.}
Our inference takes $x_T$ derived from a real test image. True
unconditional generation would require sampling $x_T$ from the
empirical distribution of training $x_T$ images, which we have not
tried. The OOD experiment is the closest existing test of
controllability under non-matched conditioning; a fully unconditional
generation setup is left to future work.

\paragraph{Five evaluation seeds, not five training seeds.}
The numbers in Section~\ref{sec:experiments} are mean $\pm$ std over
five evaluation seeds, where the seed varies test-image subsampling
and the baseline's noise trajectory. We have not trained multiple
checkpoints with different initialisation seeds. Training-side
variance is an open question for a camera-ready version of this work.

\paragraph{Single dataset, three classes.}
We have evaluated on MNIST digits $\{0, 1, 8\}$ at $28 \times 28$.
We investigated Fashion-MNIST as a candidate second dataset and
found that its classes have inconsistent and mostly trivial $\beta_1$
distributions (modal $\beta_1 = 0$ for every class), which makes
it unsuitable for the controllability test as we have defined it.
A natural follow-up is a synthetically-controlled topology benchmark
(shapes with parameterised hole counts), which would extend the test
to $\beta_1 > 2$ cleanly. Real-world second-dataset evaluation —
molecular contact maps, fingerprint minutiae, vasculature
segmentations — would test scaling to natural domains with
consistent topology.

\paragraph{Stronger baselines have been tested.}
We compared against two baselines: a 5-dimensional persistence
summary (Appendix~A) and a 64-dimensional persistence landscape
(Section~\ref{sec:experiments}). The gap between PFlow-T and the
baseline does not close as the descriptor becomes more informative.
Hu et al., TopoDiffusionNet, and ZS-DM use full persistence
diagrams or zigzag descriptors via cross-attention; we expect
their performance on our benchmark would be comparable to the
landscape baseline, since both convey substantially the same
geometric information, but verifying this empirically would
require either porting those methods into our framework or
adapting our forward process into theirs. This is a real
follow-up, and the current paper does not claim to have done it.

\section{Conclusion}

PFlow-T is a generative model in which the forward process is
defined by the persistent homology of the data, rather than by
topology-agnostic noise. On a small but careful MNIST controllability
benchmark, it improves over a parameter-matched DDPM baseline
conditioned on a 64-dim persistence landscape by $88.8$pp on average
on classes that require generating closed loops, and by $82.3$pp on
an out-of-distribution test where the conditioning topology disagrees
with the class prior. Results are mean over 5 evaluation seeds with
standard deviations under $7$pp in every reported cell.

The point of the paper is not the size of the gap on this one
benchmark; the point is that there is a structurally distinct
position in the design space — substrate-based topology-aware
generation, as opposed to conditioning-based — that prior work has
not occupied, and that occupying it appears to give a useful
inductive bias.

\section{Future work}
\label{sec:future}

The most natural next steps fall into four roughly orthogonal directions.

\paragraph{Faithful pair cancellation.}
Our $\mathcal{M}_t$ implements topology destruction in pixel space
(filling loop interiors via flood-fill). The faithful version would
implement Edelsbrunner--Harer pair cancellation
directly~\cite{edelsbrunner2010computational}, using a differentiable
persistence layer~\cite{leygonie2022differentiability,carriere2021optimizing}
to lift birth-cell values up to their paired death values along the
correct optimal-cancellation path. We expect this to improve sample
quality more than match rate; the latter is already saturated at our
modest network capacity.

\paragraph{Higher Betti numbers and other data types.}
Our experiments cover $\beta_1 \in \{0, 1, 2\}$ on $28 \times 28$
images. The framework extends naturally to $H_0$ control (connected
component counts), to 3D shapes (where $H_2$ becomes relevant for
enclosed voids), and to graphs (where the input is an adjacency
matrix rather than an image). Graph topology has a richer toolkit
including zigzag persistence~\cite{carlsson2009zigzag} and
multi-parameter persistence~\cite{carlsson2009theory,vipond2020multiparameter},
both of which could in principle drive a forward process in the
PFlow-T style. The recent ZS-DM line of Chen and Gel~\cite{chen2025zigzag}
suggests there is already empirical demand for zigzag-aware
generative models on graphs.

\paragraph{Persistence-module state space.}
The strongest version of ``topology as substrate'' would make the
state at time $t$ a persistence module itself, with a separate
encoder/decoder pair mapping persistence modules to images.
Recent work on neural persistence
representations~\cite{som2018perturbation,kim2020pllay,horn2022topological}
provides candidate encoders; a generative decoder mapping modules
back to images would be a research project in its own right but is
the natural endpoint of the substrate-based design philosophy.

\paragraph{Controlled scaling.}
We have not scaled past $28 \times 28$. A natural mid-scale next
target is CelebA-HQ or AFHQ at $64 \times 64$, where the topology
is more variable but still tractable via cubical persistence.
Beyond that, scaling to high-resolution natural images would
likely require working in a latent space (along the lines of
LDM~\cite{rombach2022latent}) and pushing the forward process onto
latent activations rather than pixels. Whether persistence-driven
schedules on latents preserve the controllability advantage of the
pixel-space version is an interesting empirical question we leave
to follow-up work.

\paragraph{Beyond MNIST: real-world second datasets.}
Section~\ref{sec:limitations} reports that Fashion-MNIST is unsuitable
for our setup because its classes have mostly trivial $\beta_1$
distributions. Better candidates for a second-dataset story include
molecular contact maps (RNA structure, protein binding sites),
fingerprint minutiae~\cite{ratha2007fingerprint},
vasculature segmentation, and floor plan layouts. All four have
consistent, semantically meaningful $\beta_1$ structure and are
common application domains for topology-aware ML.

\bibliographystyle{plain}

\clearpage

\appendix
\renewcommand{\thesection}{Appendix \Alph{section}}
\section{Weaker-baseline ablation: 5-d persistence summary}
\label{app:weak_baseline}

We additionally trained a baseline whose conditioning is a
5-dimensional persistence summary
$\{n_{H_0}, n_{H_1}, \max\pi_{H_1}, \sum\pi_{H_1}, \max\pi_{H_0}\}$
rather than the 64-dimensional persistence landscape used in the
main experiments. The 5-d summary is the most compact descriptor
one might reasonably use; the landscape (64-d, used in
Section~\ref{sec:experiments}) is a strict information-theoretic
superset. The point of this appendix is to confirm that the
controllability gap is not specific to the strong landscape baseline.
The 5-d ablation was run on an earlier checkpoint (training epoch 7
rather than epoch 19), so the numbers in Table~\ref{tab:appx_weak}
are not directly comparable to Table~\ref{tab:indist}. We did not
re-run the 5-d baseline at the larger training budget because the
landscape baseline is a strict superset and represents the more
demanding comparison; the 5-d ablation simply confirms the
qualitative direction of the result.

\begin{table}[h]
\centering
\renewcommand{\arraystretch}{1.4}
\setlength{\tabcolsep}{10pt}
\begin{tabular}{|c|c|c|c|}
\hline
$\beta_1$ & PFlow-T (\%) & DDPM + 5-d (\%) & $\Delta$ (pp) \\
\hline
0 & $96.7 \pm 3.3$ & $98.9 \pm 1.9$ & $-2.2$ \\
\hline
1 & $81.1 \pm 7.7$ & $\phantom{0}2.2 \pm 3.8$ & $+78.9$ \\
\hline
2 & $42.2 \pm 6.9$ & $\phantom{0}0.0 \pm 0.0$ & $+42.2$ \\
\hline
\end{tabular}
\caption{In-distribution controllability against the weaker 5-d
summary baseline, 3 seeds, $n=30$ per class.}
\label{tab:appx_weak}
\end{table}

\end{document}